\g@addto@macro\normalsize{%
  \setlength\abovedisplayskip{10pt}
  \setlength\belowdisplayskip{10pt}
  \setlength\abovedisplayshortskip{5pt}
  \setlength\belowdisplayshortskip{8pt}
}
\newtheorem{thm}{Theorem}
\theoremstyle{definition}
\newtheorem{rmk}[thm]{Remark}
\begin{document}
\allowdisplaybreaks
$ $
\vspace{-20pt}

\title{An in-depth look at approximation\\[2pt]via deep and narrow neural networks}

\author{Joris Dommel$^{1,3}$ and Sven A.\ Wegner$^{2,3}$}

\renewcommand{\thefootnote}{}
\hspace{-1000pt}\footnote{\hspace{5.5pt}2020 \emph{Mathematics Subject Classification}: Primary 68T07; Secondary 41A30.\vspace{1.6pt}}

\hspace{-1000pt}\footnote{\hspace{5.5pt}\emph{Key words and phrases}: Deep and narrow neural networks; expressivity; regression tasks; dead neurons. \vspace{1.6pt}}

\hspace{-1000pt}\footnote{\hspace{0pt}$^{1}$\,University of Hamburg, Department of Mathematics, Bundesstra\ss{}e 55, 20146 Hamburg, Germany.\vspace{3pt}}

\hspace{-1000pt}\footnote{\hspace{0pt}$^{2}$\,Corresponding author: University of Hamburg, Department of Mathematics, Bundesstra\ss{}e 55, 20146 Ham-\linebreak\phantom{x}\hspace{1.2pt}burg, Germany, phone:+49\,(0)\,40\:42838\:51\:20, e-mail: sven.wegner@uni-hamburg.de.\vspace{3pt}}

\hspace{-1000pt}\footnote{\hspace{0pt}$^{3}$\,Both authors are thankful for support from the Open Access Publication Fund of Universit\"at Hamburg.}

\begin{abstract}In 2017, Hanin and Sellke showed that the class of arbitrarily deep, real-valued, feed-forward and ReLU-activated networks of width $w$ forms a dense subset of the space of continuous functions on $\mathbb{R}^n$, with respect to the topology of uniform convergence on compact sets, if and only if $w>n$ holds. To show the necessity, a concrete counterexample function $f\colon\mathbb{R}^n\rightarrow\mathbb{R}$ was used. In this note we actually approximate this very $f$ by neural networks in the two cases $w=n$ and $w=n+1$ around the aforementioned threshold. We study how the approximation quality behaves if we vary the depth and what effect (spoiler alert: dying neurons) cause that behavior.
\end{abstract}

\maketitle

\vspace{-15pt}

\section{Introduction}\label{SEC-Intro}

While one in retrospect may point to several precursors for the looming success of neural networks, such as Rosenblatt's perceptron \cite{Rosen} in the 1950s or the expressivity results established around 1990 by Cybenko, Hornik et al.\ \cite{Cyb89, Hornik91, HSW}, it took until the 2010s for neural networks really to hit their stride, cf.~also Goodfellow et al.~\cite[Section 1.2]{Goodfellow} and Prince \cite[p.~ix]{UDL}. One of the final triggers was the discovery that, in practical applications, a deep neural network usually gets by with far fewer neurons than its shallow siblings when challenged with the same regression or classification task. This effect is often referred to as the \emph{power of depth}. There are several results, see e.g.\ Hanin \cite{Hanin} and the references therein, showing that arbitrarily deep ReLU-activated networks form a dense subset of the space of continuous functions $f\colon\mathbb{R}^n\rightarrow\mathbb{R}$ if one allows their width $w$ to be at least $n+1$ (assuming a certain design of the output layer, see Section \ref{SEC-Back}). That this width condition is crucial was made explicit in \cite{HS17} where Hanin and Sellke showed that choosing $w$ less or equal to $n$ leads to non-density. Paluzo-Hidalgo et al.\ \cite{PGG20} recently coined this effect the \emph{power of width}.

\smallskip

The landmark results just mentioned can be further refined in various ways: Hertrich et al.\ \cite{HBDS} study how the class of functions represented by deep networks grows as one increases the depth; Beise et al.\ \cite{Beise21} prove that ReLU-networks with width less or equal to $n+1$ have necessarily unbounded decision regions;  Bresler, Nagaraj \cite{BN20} consider special classes of functions and study the approximation quality via networks in terms of their width, while the function class depends on the depth; Liu, Liang \cite{LL21} consider univariate convex functions and give precise estimates of the approximation quality that can (theoretically!) be achieved.

\smallskip

In addition to the question of expressivity, in practice one is faced with the task of concretely picking a network architecture and calculating weights and bias. Also in this direction there are many recent results, e.g.: Calvo-Pardo et al.\ \cite{CMO23} discuss how to choose an appropriate architecture within the bounds given above; Welper \cite{Welper22} proves that if there is any algorithm that trains a network well, then there is an extension of the network that can be trained by gradient descent to achieve the same approximation quality; Zou et al.\ \cite{ZCZG20} show that gradient descent can find a global minimizer of the training loss function for an appropriate class of deep ReLU-networks. On the other hand, in the aforementioned article \cite{LL21}, Liu, Liang show via examples that in their setting the theoretically possible (and provable!) accuracy cannot be achieved by typical training methods, i.e., gradient descent based algorithms, in practice.

\smallskip

In this paper we return to Hanin and Sellke's results on expressivity \cite{HS17}. Building on their proof for non-density of deep and narrow networks\hspace{1pt}---\hspace{1pt}which uses a concrete counterexample function $f\colon K\subseteq\mathbb{R}^n\rightarrow\mathbb{R}$ and then bounds $\|f-N\|_{K,\infty}$ from below for all networks $N$ of width $w=n$ by a constant independent of the networks depth's $d$\hspace{1pt}---\hspace{1pt}we are mainly interested in the following three questions:\vspace{4pt}

\begin{compactitem}

\item[Q1.] Is the lower bound for $\|f-N\|_{K,\infty}$ in Hanin, Sellke's proof sharp and can it be attained by training neural networks with the standard algorithms?

\vspace{2pt}

\item[Q2.] Despite the non-density, does increasing the depth has an effect on the approximation and what exactly happens in the network during training? 

\vspace{2pt}

\item[Q3.] With width $w=n+1$ the theoretical results yield density\hspace{1pt}---\hspace{1pt}but of course only if arbitrary depth $d$ is allowed. How does the approximation of $f$ in practice depend on the concrete value of $d$?

\end{compactitem}

\vspace{4pt}

\section{Theoretical background}\label{SEC-Back}

Let $n,m\in\mathbb{N}$. In this article, by a \emph{neural network} of width $w\in\mathbb{N}$ and depth $d\in\mathbb{N}$, we shall understand a function $N\colon\mathbb{R}^n\rightarrow\mathbb{R}$ of the form
\begin{equation}\label{EQN}
N= A_d\circ \operatorname{ReLU} \circ A_{d-1} \circ \dots \circ \operatorname{ReLU} \circ A_1
\end{equation}
where $A_i\colon\mathbb{R}^{w_i}\to \mathbb{R}^{w_{i+1}}$ are affine linear maps for $w_1,w_2,\dots,w_{d}\in\mathbb{N}$ with $w_1=n$, $w_d=1$, $w=\max_{i=1,\dots,d}w_i$ and $\operatorname{ReLU}\colon \mathbb{R}^{w_i} \to \mathbb{R}^{w_i}$, $\operatorname{ReLU}(x_1,\dots,x_{w_i})=(\max(0,x_1),\dots \max(0,x_{w_i}))$ is the coordinate-wise Rectified Linear Unit. Notice that many authors do not allow a bias in the output layer but that we, following \cite{HS17}, do so. We denote by $\mathcal{D}^{\operatorname{ReLU},w,d}(\mathbb{R}^n)$ the set of all functions $N$ of the above form. Consequently,
$$
\mathcal{D}^{\operatorname{ReLU},w}(\mathbb{R}^n)=\raisebox{1pt}{$\displaystyle\mathop{\textstyle\bigcup}_{d\in \mathbb{N}}$} \mathcal{D}^{\operatorname{ReLU},w,d}(\mathbb{R}^n)
$$
is the set of neural networks of width $w$ and arbitrary depth. By $\mathcal{C}(\mathbb{R}^n)$ we denote the space of real-valued continuous functions endowed with the topology of uniform convergence on compact subsets. This space is the natural framework to study the expressivity of neural networks, cf.~\cite[Chapter 16]{Wegner24}. Specifically, for the class of ReLU-activated feed-forward neural networks as defined above, \cite[Theorem 1]{HS17} states that $\mathcal{D}^{\operatorname{ReLU},w}(\mathbb{R}^n)\subseteq\mathcal{C}(\mathbb{R}^n)$ is dense if $w\geqslant n+1$ holds. By \cite[Section 3]{HS17} the same condition is also necessary for density. 

\smallskip
   
For the convenience of the reader we give a streamlined proof of the aforementioned necessity, in particular as we are going to study the counterexample employed by Hanin and Sellke further in Sections \ref{SEC-Ex} and \ref{SEC-3} below.

\begin{thm}\label{HS-thm}\cite[Theorem 1.1 and Section 3]{HS17} For $n\in\mathbb{N}$ the space $\mathcal{D}^{\operatorname{ReLU},n}(\mathbb{R}^n)$ of neural networks is not dense in $\mathcal{C}(\mathbb{R}^n)$; indeed it even holds:
$$
\exists\:\eta>0\;\forall\:n\in\mathbb{N}\;\exists\:K\subseteq\mathbb{R}^n\,\text{compact},\,f\in\mathcal{C}(\mathbb{R}^n)\;\forall\,N\in\mathcal{D}^{\operatorname{ReLU},n}(\mathbb{R}^n)\colon\|f-N\|_{K,\infty}\geqslant\eta.
$$
\end{thm}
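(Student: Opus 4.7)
My plan is to distill a topological constraint satisfied by every width-$n$ network and then choose $f$ and $K$ so that $\|f-N\|_{K,\infty}<\eta$ forces that constraint to fail. Every $N\in\mathcal{D}^{\operatorname{ReLU},n}(\mathbb{R}^n)$ factors as $N=A_d\circ\phi$, where $\phi\colon\mathbb{R}^n\to\mathbb{R}^n_{\geqslant 0}$ collects the first $d-1$ affine maps interleaved with the coordinate-wise ReLU activations. It is the coincidence that $\phi$'s target orthant has dimension equal to the input dimension which makes the width-$n$ case rigid, and this is what suggests studying superlevel sets.

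The central geometric claim I would establish is the following lemma: for every $N\in\mathcal{D}^{\operatorname{ReLU},n}(\mathbb{R}^n)$ and every $c\in\mathbb{R}$, no connected component of the open superlevel set $\{x\in\mathbb{R}^n:N(x)>c\}$ is bounded. I would prove this by induction on the depth $d$. The base case $d=1$ is immediate since then $N$ is affine and $\{N>c\}$ is an open halfspace. For the induction step, I would factor $N=\widetilde{N}\circ(\operatorname{ReLU}\circ A_1)$ with $\widetilde{N}\in\mathcal{D}^{\operatorname{ReLU},n,d-1}(\mathbb{R}^n)$, pick $x_0$ with $N(x_0)>c$, apply the inductive hypothesis to $\widetilde{N}$ at the point $\operatorname{ReLU}(A_1(x_0))$ to obtain an escape arc, and then lift it through $\operatorname{ReLU}\circ A_1$ back to an unbounded arc in $\mathbb{R}^n$ on which $N$ stays above $c$. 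Here one exploits that the $n$ hyperplanes $\{A_1(x)_i=0\}$ partition $\mathbb{R}^n$ into polyhedral cells, none of which can be bounded, since cutting $\mathbb{R}^n$ by at most $n$ affine hyperplanes never produces a bounded cell; on each such cell $\operatorname{ReLU}\circ A_1$ is affine and its unbounded recession direction provides the material for the lift.

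Granted the lemma, the conclusion is quantitative and uniform in $n$. Fix $\eta=1/2$. For each $n\in\mathbb{N}$ set $K=\{x\in\mathbb{R}^n:\|x\|\leqslant 2\}$ and define the continuous tent $f(x)=\max\bigl(0,\,2-\|x\|\bigr)$, so that $f(0)=2$ and $f\equiv 0$ on the sphere $\{\|x\|=2\}$. Suppose, towards a contradiction, that some $N\in\mathcal{D}^{\operatorname{ReLU},n}(\mathbb{R}^n)$ satisfies $\|f-N\|_{K,\infty}<\eta$. Then $N(0)>3/2>1$ while $N(x)<1/2<1$ for every $x$ with $\|x\|=2$. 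By continuity of the Euclidean norm, any path joining the origin to a point of norm at least $2$ must pass through a point of the sphere, at which $N<1$. Hence the component of $\{N>1\}$ containing the origin is contained in the open ball of radius $2$ and is therefore bounded, contradicting the lemma.

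The main obstacle is clearly the geometric lemma, since the reduction from it to the theorem is what is written above. The difficulty is that neither $A_1$ nor $\operatorname{ReLU}\circ A_1$ need be injective, so lifting the escape arc is not canonical and requires a case analysis over the polyhedral cells of the first layer's activation pattern. Tracking how these cells mate with one another across successive layers, and checking that the escape arc can always be chosen inside the image of $\operatorname{ReLU}\circ A_1$ rather than merely inside $\mathbb{R}^n_{\geqslant 0}$, is the technical heart of the original Hanin--Sellke argument and the true place where the critical hypothesis $w=n$ is exploited.
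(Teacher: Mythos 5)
Your overall architecture is sound and is a legitimate repackaging of Hanin--Sellke: the reduction from your geometric lemma to the theorem is airtight (if $\|f-N\|_{K,\infty}<1/2$ for the tent $f(x)=\max(0,2-\|x\|)$, then the component of $\{N>1\}$ containing the origin is trapped inside $\operatorname{B}_2(0)$, since $N<1/2$ on the sphere $\|x\|=2$), and your lemma --- no bounded connected component of $\{N>c\}$ for width-$n$ networks --- is true and equivalent, after negating the output layer, to the obstruction the paper uses for its paraboloid with bounded \emph{sub}level sets. The difference in route is that the paper works with the set $S_N$ of inputs on which every neuron is strictly active: on $S_N$ (convex) the network is affine and cannot have an interior minimum below its boundary values, while for points outside $S_N$ it invokes \cite[Lemma 6]{HS17} to push a level set out of $K$. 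Your single lemma would subsume both cases, which is aesthetically cleaner.

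The genuine gap is that the lemma is where all the mathematical content lives, and the induction you sketch for it does not close. The inductive hypothesis ``no bounded component of $\{\widetilde{N}>c\}$'' applied to the depth-$(d-1)$ network $\widetilde{N}$ produces an escape arc in the \emph{target} copy of $\mathbb{R}^n$, but to descend to an arc in the input space you must lift it through $\operatorname{ReLU}\circ A_1$, whose image is contained in the non-negative orthant and is a proper (possibly lower-dimensional, if $A_1$ is not surjective) subset of it. Nothing in the inductive hypothesis guarantees the escape arc can be chosen inside that image, and in general it cannot: one needs a strictly stronger inductive statement (roughly, escape to infinity within the closure of the image of the layers already applied), and proving that strengthened statement is exactly where the hypothesis $w=n$ does its work. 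Your observation that $n$ hyperplanes cannot cut $\mathbb{R}^n$ into bounded cells addresses the source side of the lift, not this target-side obstruction. You flag the problem yourself, but flagging it is not the same as resolving it; as written, the proposal defers the entire proof of the theorem to an unproven lemma whose suggested proof strategy fails at its key step. (To be fair, the paper also black-boxes this step, but it does so by citing the specific published Lemma 6 of \cite{HS17} and supplying the separate convexity argument needed for points of $S_N$, to which that lemma does not apply.)
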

\begin{proof} Put $\eta=1/16$ and let $n\in\mathbb{N}$ be given. Put $K=\overline{\operatorname{B}}_{1/2}(1/2,\dots,1/2)$, consider
\begin{equation*}
f\colon \mathbb{R}^n\to\mathbb{R},\;\;f(x_1,\dots,x_n)=\mathop{\scalebox{1.15}{$\sum$}}_{i=1}^n \big(x_i-1/2\big)^2
\end{equation*}
and assume that there is $N\in\mathcal{D}^{\operatorname{ReLU},n}(\mathbb{R}^n)$ with $\|f-N\|_{K,\infty}<\eta$. Put $x_0=(1/2,\dots,1/2)$, i.e., $f(x_0)=0$ and put $C=\overline{\operatorname{B}}_{1/\sqrt{8}}(x_0)$. Observe $\partial K=f^{-1}(1/4)$, $\partial C=f^{-1}(1/8)$ and $x_0\in C\subseteq K$. In particular $1/8-1/16\leqslant N|_{\partial C}\leqslant1/8+1/16$. By definition, the function $N$ is of the form \eqref{EQN}. Let 
$$
S_N=\bigl\{x\in\mathbb{R}^n\:\big|\: \forall\:k=1,\dots,d\hspace{1.25pt}{-}\hspace{0.5pt}1\colon \bigl[\operatorname{ReLU} \circ A_{k} \circ\dots \circ \operatorname{ReLU} \circ A_1\bigr](x)>0\text{ coordinate-wise}\bigr\}
$$
which is a convex set and by construction the neural network $N$ coincides with the affine linear map $A_d\circ\cdots\circ A_1$ when restricted to $S_N$. 
We treat two cases.

\smallskip

1. Assume $\partial C\subseteq S_N$ and thus due to convexity $C\subseteq S_N$. Then $N$ is affine linear on $C$ and thus attains its minimum on $\partial C$. Employing that $x_0\in C$ holds, this yields 
$$
\|f-N\|_{K,\infty}\geqslant |f(x_0)-N(x_0)|\geqslant N(x_0)\geqslant\min_{x\in\partial C}N(x)\geqslant1/16.\vspace{-3pt}
$$
\indent{}2. Assume that there exists $x_1\in \partial C\backslash S_N$. Then one can show by induction \cite[Lemma 6]{HS17} that the set $N^{-1}(x_1)$ contains a continuous path to some $x_2\in\mathbb{R}^n\backslash{}K$. As $x_1\in C\subseteq K$ holds, there exists a point $x_3\in\partial K$ with $N(x_1)=N(x_3)$ and thus
$$
\|f-N\|_{K,\infty}\geqslant |f(x_3)-N(x_3)|= |1/4-N(x_1)| \geqslant 1/4-(1/8+1/16)=1/16.
$$
\noindent{}Thus we reach a contradiction in both cases.
\end{proof}

Let us mention that, by Theorem \ref{HS-thm}, any function $h\colon\mathbb{R}^n\rightarrow\mathbb{R}$ of the form $h=g+b$ with $g\in\mathcal{C}(\mathbb{R}^n)$ satisfying $\|f-g\|_{\infty}<\eta/2$, and $b\in\mathbb{R}$ being an arbitrary constant, cannot be approximated with an error smaller than $\eta/2$ by $N\in\mathcal{D}^{\operatorname{ReLU},n}(\mathbb{R}^n)$. Before we continue let us point out the following observation.

\begin{rmk}\label{RMK} For the constant neural network $N_0\colon\mathbb{R}^n\rightarrow\mathbb{R}$, $N_0\equiv1/8$, and $f\colon K\rightarrow\mathbb{R}$ as in the proof of Theorem \ref{HS-thm}, we have $\|f-N_0\|_{K,\infty}=1/8$.
\end{rmk}

\smallskip

\section{Experiments in the non-dense case}\label{SEC-Ex}\vspace{-3pt}

\begin{center}
\small\sc\phantomsection{3.1 Low dimension}\label{SEC-31}
\end{center}

\medskip

After having recalled in Section \ref{SEC-Back} that $\mathcal{D}^{\operatorname{ReLU},n}(\mathbb{R}^n)\subseteq\mathcal{C}(\mathbb{R}^n)$ is \emph{not} dense, we now nevertheless aim to approximate the function from the proof of Theorem \ref{HS-thm}, i.e.,
\begin{equation}\label{FF}
f\colon K=\overline{\operatorname{B}}_{1/2}(1/2,\dots,1/2)\rightarrow\mathbb{R},\;f(x_1,\dots,x_n)=\mathop{\scalebox{1.15}{$\sum$}}_{i=1}^n \big(x_i-1/2\big)^2
\end{equation}
via neural networks $N\in\mathcal{D}^{\operatorname{ReLU},n,d}(\mathbb{R}^n)$ of fixed but variable depth. W.l.o.g.\ we will use a network architecture in which all hidden layers have width $n$, cf.~the pictures in Figures \hyperref[FIG-4]{4}, \hyperref[FIG-5]{5} and \hyperref[FIG-7]{7} as well as equation \eqref{EQN}. As training data we use points $(x,f(x))$, where $x$ comes from a finite subset of $K$. In dimension $n=2$ we generated the latter set in two ways: We either chose uniformly at random from $[0,1]^n$ or we considered a regular grid inside $[0,1]^n$; in both cases we then intersected with $K$. For $n=2$ we indeed did not detect a qualitative difference between both approaches in view of our questions Q1\,--\,Q3. Therefore we will present below only results from the first method. Later, in dimension $n=5$, the so-called curse of high dimensionality will force us to pick the sample in a more sophisticated way, see Section \hyperref[SEC-32]{3.2}.

\smallskip

We start now with the case $n=2$ and a $100{\times}100$ grid in $[0,1]^2$, which amounts to $m=7668$ points after intersecting with the ball $K$. The training process for our neural networks was carried out via \texttt{TensorFlow}, see \cite{dw25} for our complete code. Following \cite{KB17} we employed the \texttt{Adam} optimizer with batch size 1 over 50 epochs of training. We trained ten different models with the same training data. As loss function we used the mean squared error
$$
\operatorname{L}(f,N)={\textstyle\frac{1}{m}}\mathop{\scalebox{1.15}{$\sum$}}_{j=1}^{m}(f(x_j)-N(x_j))^2 \,\text{ with samples }\, x_1,\dots, x_m\in K.
$$
Since we are in fact interested in an approximation with respect to the supremum norm, the following plots show both the MSE loss and the $\infty$-norm. The latter we calculated by maximizing over the $x_1,\dots,x_m$. For each depth we plotted the training processes with the best and the worst approximation result in supremum norm as well as the average over the ten networks that we trained.

\smallskip

\hspace{-20pt}\begin{tabular}{ccc}
\includegraphics[width=155pt]{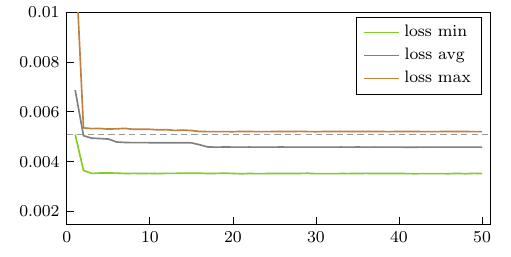}
&
\hspace{-15pt}\includegraphics[width=155pt]{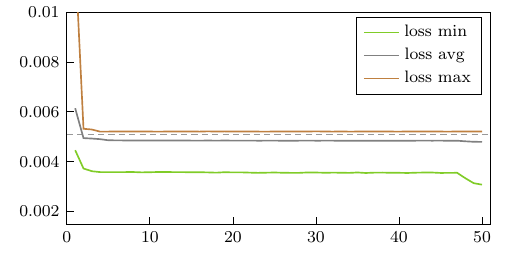}
&
\hspace{-15pt}\includegraphics[width=155pt]{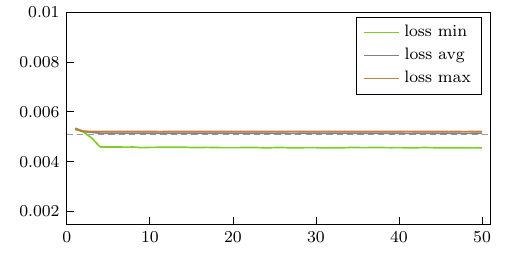}
\end{tabular}

\hspace{-20pt}\begin{tabular}{ccc}
\includegraphics[width=155pt]{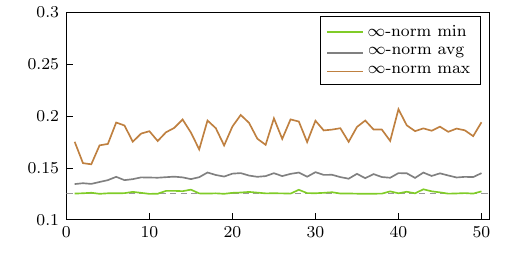}
&
\hspace{-15pt}\includegraphics[width=155pt]{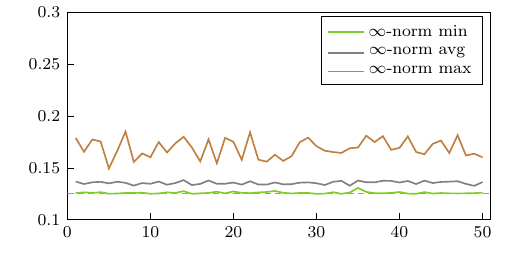}
&
\hspace{-15pt}\includegraphics[width=155pt]{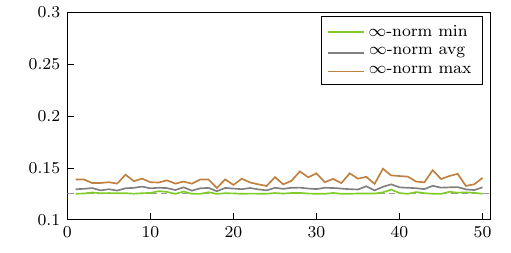}
\end{tabular}\vspace{-5pt}
\begin{center}
\footnotesize

\phantomsection{\bf Figure~1:}\label{FIG-1} Loss function for $n=w=2$ and error in supremum norm during 50 epochs of \\training; from left to right we see the results for $d=1,2,8$; the dashed line corresponds to $N_0$.

\normalsize
\end{center}

\smallskip

While we know $\|f-N\|_{\infty}\geqslant1/16$ from Theorem \ref{HS-thm}, in the experiments $\|f-N\|_{\infty}\geqslant1/8$ holds even for the best performing model, which suggests that, in supremum norm, the constant function $N_0$ from Remark \ref{RMK} is the best approximant! This may be surprising at first but indeed it is very reasonable: The following picture suggests that one ReLU alone cannot have a smaller distance to $f$ in $\infty$-norm than $N_0$ and even if we superponate with a second ReLU this distance cannot be reduced:

\begin{center}

\includegraphics[width=160pt]{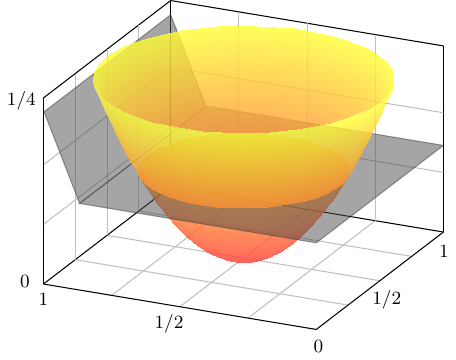}

\footnotesize

\phantomsection{\bf Figure~2:}\label{FIG-2} Graphs of $f$ as in \eqref{FF} and $N=\operatorname{ReLU}\bigl(\langle\left[\begin{smallmatrix}-0.92\\0\end{smallmatrix}\right],\cdot\,\rangle{}+0.12\bigr)+0.12$;\\observe that we only approximate on $K=\overline{\operatorname{B}}_{1/2}(1/2,1/2)$.

\normalsize
\end{center}

\smallskip

The above heuristics applies only for $n=2$. However, independent of the dimension we see that loosing a factor of 2 compared to the theoretical lower bound makes sense in view of the proof of Theorem \ref{HS-thm}, where one estimates $1/8-1/16\leqslant N|_{\partial C}\leqslant 1/8+1/16$ and then uses both estimates in due course.

\smallskip

The next picture shows the set $\operatorname{argmax}_{x\in K}|f(x)-N(x)|$ for $n=2$ and our ten trained networks. We see that the maximizers indeed occur as in the two cases treated in the proof of Theorem \ref{HS-thm}. Notice that we plot below only the maximizers found within $\{x_1,\dots,x_m\}$; for a constant network [a network as in Figure \hyperref[FIG-2]{2}] obviously all [almost all] points in $\partial{}K$ will be maximizers.

\begin{center}
\begin{tabular}{cccc}
\phantom{FFFF}&\includegraphics[height=100pt]{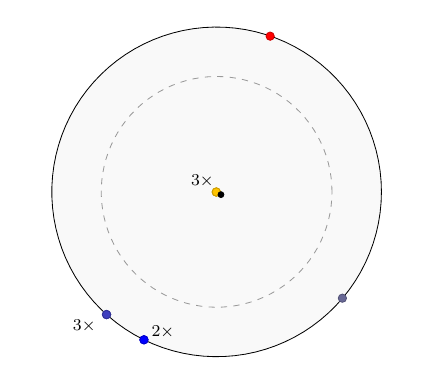}
&
\hspace{-15pt}\includegraphics[height=100pt]{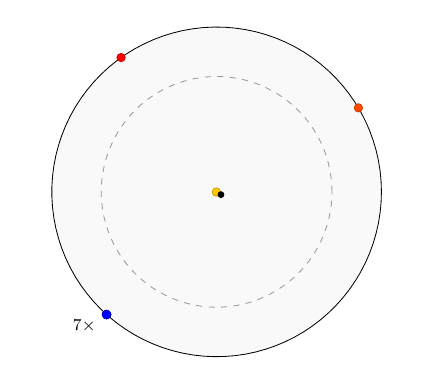}
&
\hspace{-15pt}\includegraphics[height=100pt]{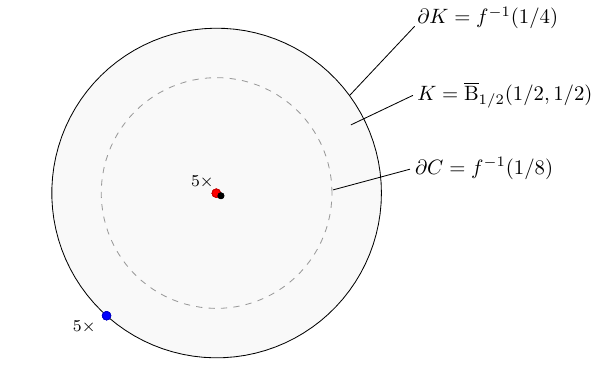}
\end{tabular}

\footnotesize

\phantomsection{\bf Figure~3:}\label{FIG-3} Maximizers of $|f{-}N|$ within $\{x_1,\dots,x_m\}$ for ten neural\\networks; from left to right we see the depths 1, 2 and 8.

\normalsize

\end{center}

\smallskip

Going back to Figure \hyperref[FIG-1]{1}, we like to point out three more things: Firstly, one can see that the average approximation with respect to the supremum norm is considerably better in depth 8 than it is in depth 1 or 2. This suggests that, despite Theorem \ref{HS-thm}, increasing the depth of a narrow network may yield a gain in approximation quality. Secondly, the approximation in terms of the loss function behaves exactly opposite, i.e., $\operatorname{L}(f,N)$ in average is larger for depth 8 than it is for depth 1 and 2. This is at first sight doubly surprising since our algorithm does seek to minimize the loss\hspace{1pt}---\hspace{1pt}not the supremum norm\hspace{1pt}---\hspace{1pt}and further as a non-constant approximant as shown in Figure \hyperref[FIG-2]{2} can be expressed equally good in any depth and should produce a smaller loss than the constant function while it has the same distance in $\infty$-norm to $f$. Thirdly, in depth 1 and 2 the gap between the worst and the best model is considerably higher than in depth 8. This suggests that in these cases it happens more often that the training process gets stuck in a local minimum of the loss function.

\smallskip

Let us now take a look inside the trained neural networks. We start with depth 1 and print out below the weights and bias after training has been completed in three examples. The three networks in Figure \hyperref[FIG-4]{4} correspond to the best (left), an about average (middle) and the worst (right) models; everything with respect to supremum norm:

\begin{center}
\begin{tabular}{ccc}
\includegraphics[height=70pt]{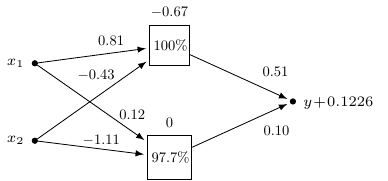}
&
\hspace{-20pt}\includegraphics[height=70pt]{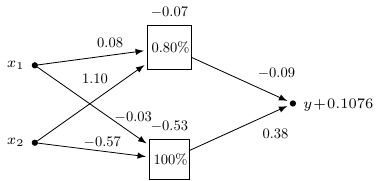}
&
\hspace{-20pt}\includegraphics[height=70pt]{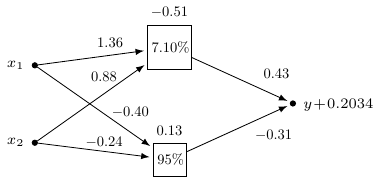}
\end{tabular}

\vspace{3pt}

\footnotesize

\phantomsection{\bf Figure~4:}\label{FIG-4} Weights and bias for three networks $N_1$ (left), $N_2$ (middle) and $N_3$ (right) for $n=w=2$\\and $d=1$. The percentages indicate how many of the sample points in $K$ get mapped to zero.

\normalsize

\end{center}

Looking at the weights and bias of $N_1$ we see immediately that the two neurons will both map (almost) the whole ball $K$ to zero; $N_1$'s output will thus be almost constant $0.1226$. Especially in cases as the upper neuron of $N_1$, which maps 100\% to zero, it is common to call such neurons \emph{dead} or \emph{inactive}, see Lee et al.\ \cite{LKYC24}, Nair, Hinton \cite[Section 5.1]{NH}, Teso-Fz-Beto{\~n}o et al.\ \cite{BZCOBU22} and Lu et al.\ \cite{LSSK20} for details. We point out in particular, that once a neuron is dead it cannot be resuscitated if one uses algorithms akin to gradient descent. In our case, out of the ten depth 1 models, a total of eight had at least one dead neuron amounting qualitatively to a function as in Figure \hyperref[FIG-2]{2}. There was one model with two dead neurons representing a constant function. Only in $N_3$ both neurons were still alive at the end of training, which suggests a picture of two superponated ReLUs. In terms of approximation quality, we have
\begin{equation*}
\begin{array}{rclcrclcrcl}
\|f-N_1\|_{K,\infty}&\hspace{-5pt}=&\hspace{-5pt}0.1274, && \|f-N_2\|_{K,\infty} &\hspace{-5pt}=&\hspace{-5pt}0.1424,& &\|f-N_3\|_{K,\infty}&\hspace{-5pt}=&\hspace{-5pt}0.1940,\\[5pt]
\operatorname{L}(f,N_1)&\hspace{-5pt}=&\hspace{-5pt}0.0045, &&\operatorname{L}(f,N_2)&\hspace{-5pt}=&\hspace{-5pt}0.0045,&&\operatorname{L}(f,N_3)&\hspace{-5pt}=&\hspace{-5pt}0.0036.\\
\end{array}
\end{equation*}
As mentioned, the almost dead network $N_1$ yields, in $\infty$-norm, the best approximant. We will come back to this effect when we discuss the depth 8 networks below. The worst performing network in the supremum norm, i.e., $N_3$, has as well a neuron which is almost dead and a much larger output bias compared to $N_1$ and $N_2$. In terms of MSE loss, $N_3$ is indeed the best performing among the networks $N_1$, $N_2$ and $N_3$ and thus an example for the MSE loss differing significantly from the supremum norm. 

\smallskip

Let us now consider the case of depth 8 and again look at two examples. Here the effect of dying neurons took place even in more extreme form. Indeed, in all of our experiments each neuron either mapped everything to zero or it mapped nothing to zero.

\medskip

\begin{center}

\hspace{30pt}\includegraphics[height=30pt]{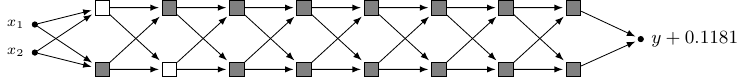}

\vspace{10pt}

\hspace{30pt}\includegraphics[height=30pt]{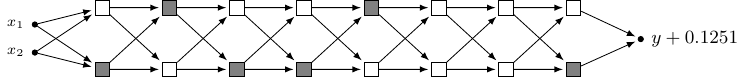}

\vspace{5pt}

\footnotesize

\phantomsection{\bf Figure~5:}\label{FIG-5} Two trained networks $N_4$ (above) and $N_5$ (below) of depth 8.\\Gray neurons map everything to zero while white neurons map nothing to zero.

\normalsize

\end{center}

\smallskip

Besides the aforementioned dichotomy of either dead or `fully alive' neurons the following two cases occurred: Either the dead neurons block the way and we get again a constant function, or a path of fully alive neurons remains that feeds input data through until the output layer. As  Figure \hyperref[FIG-1]{1} already suggested, the approximation quality of all ten trained models in supremum norm was very much alike. For the two networks in Figure \hyperref[FIG-5]{5} we have
\begin{equation*}
\begin{array}{rclcrcl}
\|f-N_4\|_{K,\infty}&\hspace{-5pt}=&\hspace{-5pt}0.1365, && \|f-N_5\|_{K,\infty}&\hspace{-5pt}=&\hspace{-5pt}0.1251,\\[5pt]
\operatorname{L}(f,N_4)&\hspace{-5pt}=&\hspace{-5pt}0.0045, && \operatorname{L}(f,N_5)&\hspace{-5pt}=&\hspace{-5pt}0.0052.
\end{array}
\end{equation*}
That is, the constant network $N_4$ performs worse in supremum norm but it is a better approximant with respect to mean squared error compared to the non-constant network $N_5$. Let us mention that we have $\operatorname{L}(f,N_0)=0.0051$, i.e., the function with constant value $1/8$, for which we have $\|f-N_0\|_{K,\infty}=0.125$, performs, with respect to MSE, worse than $N_4$ but slightly better than $N_5$.

\bigskip

\begin{center}
\small\sc\phantomsection{3.2 Higher dimension}\label{SEC-32}
\end{center}

\smallskip

We now look at $n=5$ and emphasize to begin with, that in Theorem \ref{HS-thm} the constant $\eta=1/16$ is independent of the space dimension $n$. Due to the concentration of measure phenomenon, see \cite[Chapter 9]{Wegner24}, we can neither sample points from $[0,1]^5$ uniformly at random nor can we use a regular grid, as with both methods the majority of points would be located outside the ball $K\subseteq[0,1]^5$. We therefore proceed as suggested in \cite[Section 2.5]{BHK}: We first generate $100\,000$ points from a standard normal distribution and divide each point by its euclidean norm. This yields a set of points on the unit sphere which we scale with randomly drawn $\rho\in[0,1]$ stemming from a distribution with density proportional to the 4th power of the radius. Finally we translate the sample to $K$ by adding $(1/2,\dots,1/2)$.

\smallskip

As we are still dealing with the non-dense case we keep the width at $w=n=5$ and conduct experiments as before for three different depths. We start with $d=1$, but then, in order to keep the ratio of width divided by depth constant, we consider $d=10$ and $d=20$.

\smallskip

\hspace{-20pt}\begin{tabular}{ccc}
\includegraphics[width=155pt]{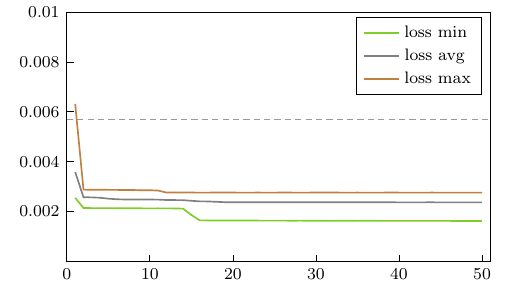}
&
\hspace{-15pt}\includegraphics[width=155pt]{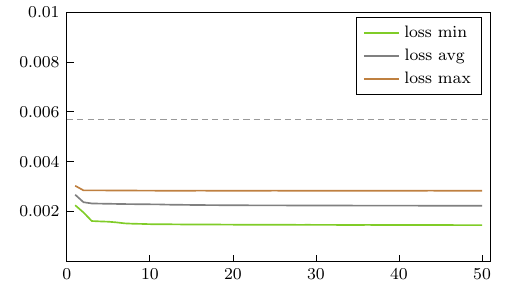}
&
\hspace{-15pt}\includegraphics[width=155pt]{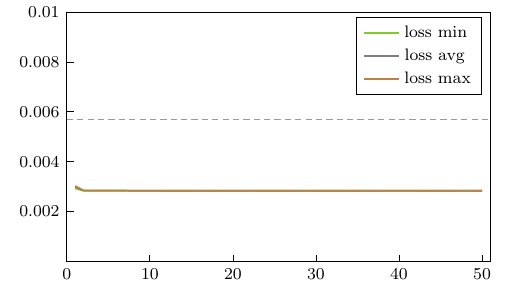}
\end{tabular}

\hspace{-20pt}\begin{tabular}{ccc}
\includegraphics[width=155pt]{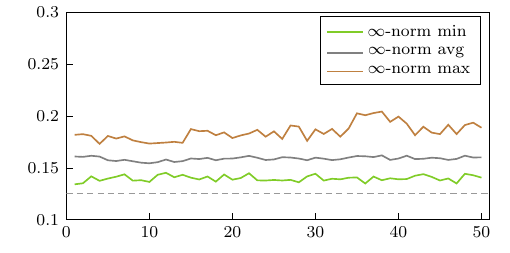}
&
\hspace{-15pt}\includegraphics[width=155pt]{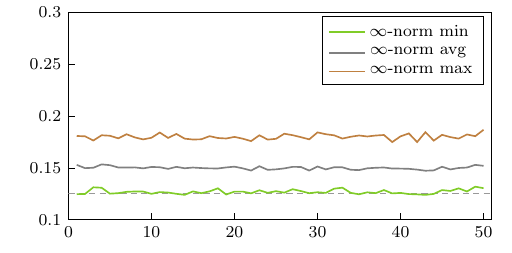}
&
\hspace{-15pt}\includegraphics[width=155pt]{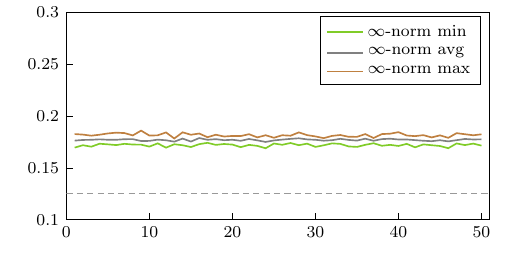}
\end{tabular}
\begin{center}\vspace{-5pt}
\footnotesize

\phantomsection{\bf Figure~6:}\label{FIG-6} Loss function for $n=w=5$ and error in supremum norm during 50 epochs of\\training; from left to right we see the results for $d=1,10,20$; the dashed line corresponds to $N_0$.

\normalsize

\end{center}

\smallskip

Indeed, the plots in Figure \hyperref[FIG-6]{6} display qualitatively the same effects as we detected in dimension two, see Figure \hyperref[FIG-1]{1}\hspace{1pt}---\hspace{1pt}with two exceptions: Firstly, while approximation in supremum norm improves when we increase the depth from 1 to 10, for depth $20$ we again obtain far worse results. Secondly, we see that now, with increased dimension but still with $w=n$ being too small for density, that in terms of the mean squared error indeed for every depth every model performs considerable better than the constant function.

\smallskip

As before let us next look inside one of the trained networks in dimension five. We choose the one which has minimal distance to $f$ in supremum norm. In the following picture we dropped the connecting arrows; it is implied that the network is fully connected. Note that, in contrast to dimension 2, the non-dead neurons were not all `fully alive' but indeed represented proper ReLUs, i.e., neither constant nor affine linear functions.

\vspace{10pt}

\begin{center}

\hspace{30pt}\includegraphics[width=380pt]{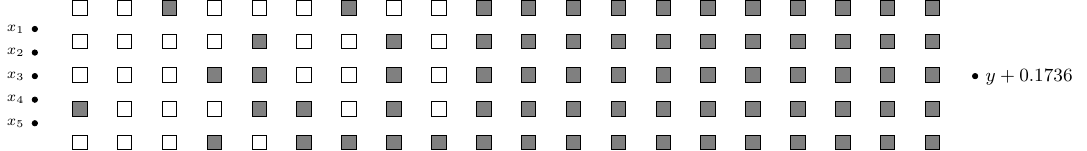}
\footnotesize

\vspace{5pt}

\phantomsection{\bf Figure~7:}\label{FIG-7} The trained network $N_6$ of width $w=5$ and depth $d=20$ is our $\infty$-norm\\best approximation of $f$; gray neurons are dead, white neurons are not dead.

\normalsize

\end{center}

\medskip

The above network achieved $\|f{-}N_6\|_{K,\infty} = 0.1715$ and performed far worse than the constant network $N_0\equiv1/8$. As said already, in terms of the loss function the situation is reversed, concretely we have $\operatorname{L}(f,N_0)=0.0057$ and $\operatorname{L}(f,N_6)=0.0028$. 

\smallskip

Going through our ten trained networks, we observed that the last layer of neurons in the $d=20$ case was in most models completely dead. This corroborates recent results by Lu et al.\ \cite{LSSK20} who proved that for $d\rightarrow\infty$ the whole network will die with high probability. Assuming that a network $N$ has died, the optimization process, seeking to minimize $\operatorname{L}(f,N)$, can only adjust the final bias of the network which is precisely the constant value of $N$. Calculating the loss on different constant functions, e.g., 
$$
\operatorname{L}(f,{\textstyle\frac{1}{16}})=0.0163,\;\;\operatorname{L}(f,{\textstyle\frac{2}{16}})=0.0057,\;\;\operatorname{L}(f,{\textstyle\frac{3}{16}})=0.0029,\;\;\operatorname{L}(f,{\textstyle\frac{4}{16}})=0.0079,
$$
makes it clear why we do not reach the $\infty$-norm minimizer but instead get a network with constant value close to $3/16=0.1875$. A concrete example is the network $N_6$ in Figure \hyperref[FIG-7]{7}.

\medskip

We conclude this section with two complete tables of the best, worst and average approximants in supremum norm for all trained models. As mentioned at the beginning of Section \hyperref[SEC-31]{3.1}, we  carried out the experiments for $n=w=2$ a second time with a training set coming from evaluations of $f$ on a subset of $K$ that was drawn uniformly at random from $[0,1]^2$. For the sake of completeness the corresponding results are included below. Recall, that the right table is based on a sample generated randomly, however not uniformly, but instead in the way explained at the beginning of Section \hyperref[SEC-32]{3.2}.

\medskip

\begin{center}
\begin{minipage}{210pt}\label{TABLE}
\begin{center}\footnotesize
Width 2\vspace{3pt}
\\\begin{tabular}{cccc}
        \toprule
Training set, depth        &min & avg & max  \\
        \midrule
   grid, $d=1$    & 0.1274 & 0.1450 & 0.1940 \\
   grid, $d=2$    & 0.1262 & 0.1364 & 0.1602 \\
   grid, $d=8$    & 0.1251 & 0.1313 & 0.1404 \\[2pt]
   random, $d=1$  & 0.1251 & 0.1465 & 0.1957 \\
   random, $d=2$  & 0.1261 & 0.1536 & 0.2153 \\
   random, $d=8$  & 0.1254 & 0.1286 & 0.1347 \\[-12pt]
                    & \phantom{xxxxxx} & \phantom{xxxxxx}& \phantom{xxxxxx} \\
        \bottomrule
\end{tabular}
\end{center}
\end{minipage}
\begin{minipage}{210pt}
\begin{center}\footnotesize
Width 5\vspace{3pt}
\\\begin{tabular}{cccc}
        \toprule
Training set, depth        &min & avg & max  \\
        \midrule
       &  &  &  \\
   & & & \\
   & & & \\[2pt]
   random, $d=1\phantom{0}$  & 0.1407 & 0.1602 & 0.1887 \\
   random, $d=10$  & 0.1305 & 0.1520 & 0.1868 \\
   random, $d=20$  & 0.1715 & 0.1776 & 0.1824 \\[-12pt]
                  & \phantom{xxxxxx} & \phantom{xxxxxx}& \phantom{xxxxxx} \\
        \bottomrule
\end{tabular}
\end{center}
\end{minipage}

\vspace{4pt}

\footnotesize

\smallskip

\phantomsection{\bf Table 1:} Numerical results for all models considered in Section \ref{SEC-Ex}.

\normalsize

\end{center}

\medskip\smallskip

\section{Experiments in the dense case}\label{SEC-3}

Finally, we want to address the case $w=n+1$ just on the other side of the dense/non-dense threshold. We know then that $\mathcal{D}^{\operatorname{ReLU},n+1}(\mathbb{R}^n)\subseteq\mathcal{C}(\mathbb{R}^n)$ \emph{is} dense, but, of course, in the set on the left hand side the networks may be arbitrarily deep. It is thus a natural question if, for the function $f$ and  the same depths as studied in Section \ref{SEC-Ex}, an improvement in accuracy is already detectable. To answer this question, we again trained ten networks on the sets used in Sections \hyperref[SEC-31]{3.1} and \hyperref[SEC-32]{3.2}, respectively. In both cases, we adjusted the number of training epochs to 100, since there are now more parameters to be optimized.

\medskip

\hspace{-20pt}\begin{tabular}{ccc}
\includegraphics[width=155pt]{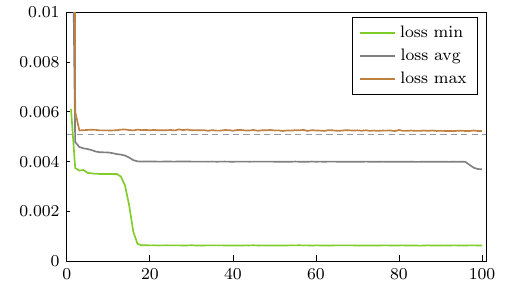}
&
\hspace{-15pt}\includegraphics[width=155pt]{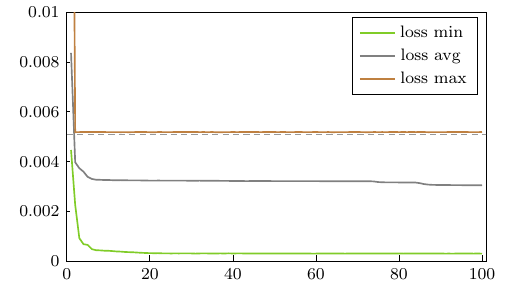}
&
\hspace{-15pt}\includegraphics[width=155pt]{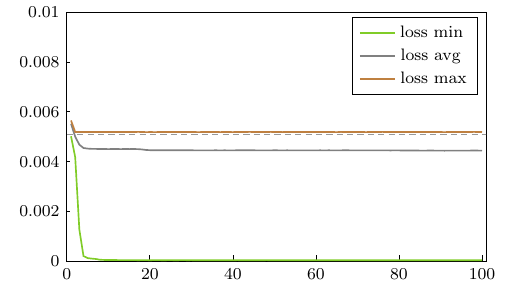}
\end{tabular}

\hspace{-20pt}\begin{tabular}{ccc}
\includegraphics[width=155pt]{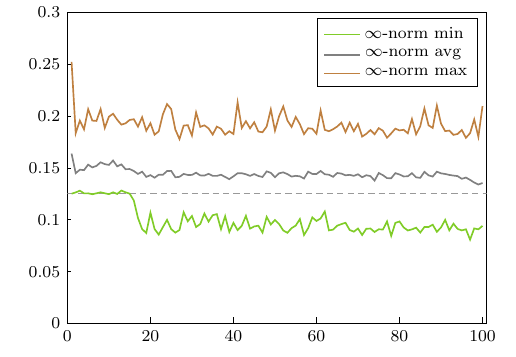}
&
\hspace{-15pt}\includegraphics[width=155pt]{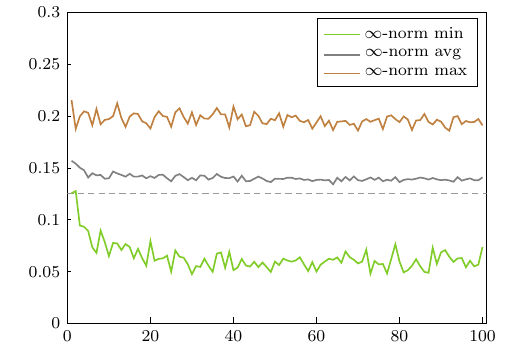}
&
\hspace{-15pt}\includegraphics[width=155pt]{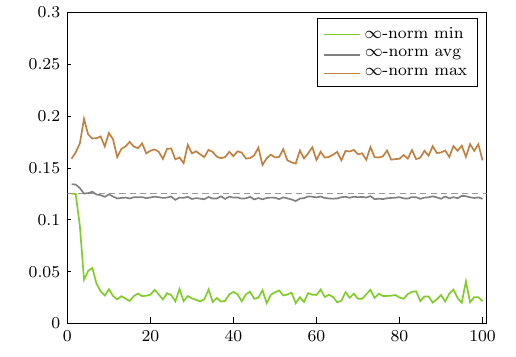}
\end{tabular}
\begin{center}\vspace{-5pt}
\footnotesize

\phantomsection{\bf Figure~8:}\label{FIG-8} Loss function for $n=2$, $w=3$ and error in supremum norm during 100 epochs of \\training; from left to right we see the results for $d=1,2,8$; the dashed line corresponds to $N_0$.

\normalsize

\end{center}

\smallskip

Comparing Figure \hyperref[FIG-8]{8} with Figure \hyperref[FIG-1]{1}, we see that having increased $w$ by just 1 leads already to much better approximations in terms of MSE. Also in $\infty$-norm at least the best performing network now clearly beats the constant approximant\hspace{1pt}---\hspace{1pt}which is perfectly reasonable if we look back at Figure \hyperref[FIG-2]{2} where now, already in depth 1, superponation of three suitably rotated ReLUs is theoretically possible. On the other hand a considerable number of models also here performs worse than the constant function so that we in average beat the latter only by a very small margin and only in depth 8.

\medskip

Figure  \hyperref[FIG-9]{9} shows the dying neuron effect in the current setting. Network $N_7$  corresponds to the green lines in both plots on the right of Figure \hyperref[FIG-8]{8}, while $N_8$ corresponds to the brown lines in the same plots. The pictures suggest that $N_7$ performs so much better as here on the very left a $3\times2$-block of neurons is not dead and its output can be fed through around the dead neurons to the output. In $N_8$ the largest non-dead block is of size $2\times2$. Also the total number of dead neurons in $N_8$ is considerably higher than in $N_7$.

\medskip

\begin{center}

\hspace{30pt}\includegraphics[]{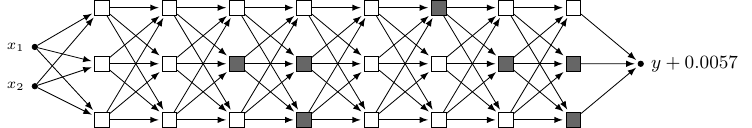}

\vspace{10pt}

\hspace{30pt}\includegraphics[]{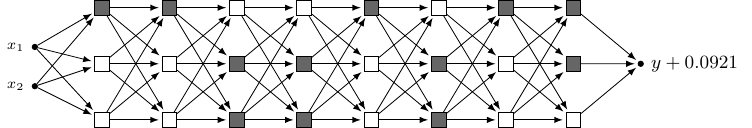}

\vspace{5pt}

\footnotesize

\phantomsection{\bf Figure~9:}\label{FIG-9} Two trained networks $N_7$ (above) and $N_8$ (below) of depth 8.

\normalsize

\end{center}

\smallskip

We continue with input dimension five.

\smallskip

\hspace{-20pt}\begin{tabular}{ccc}
\includegraphics[width=155pt]{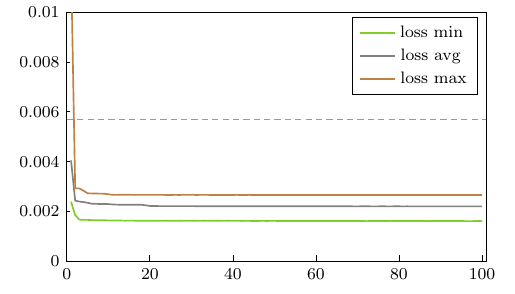}
&
\hspace{-15pt}\includegraphics[width=155pt]{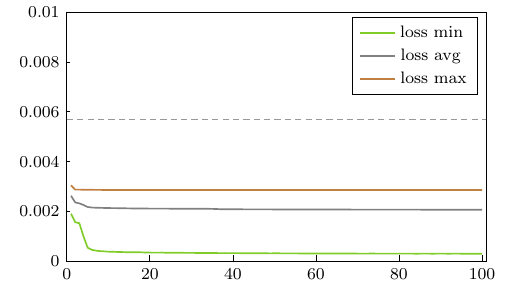}
&
\hspace{-15pt}\includegraphics[width=155pt]{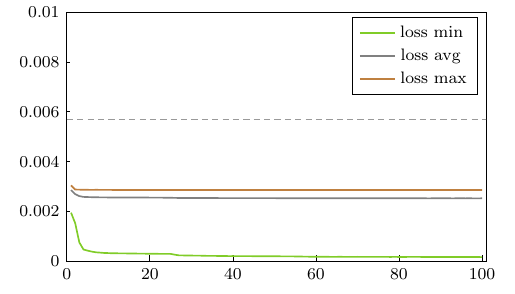}
\end{tabular}

\hspace{-20pt}\begin{tabular}{ccc}
\includegraphics[width=155pt]{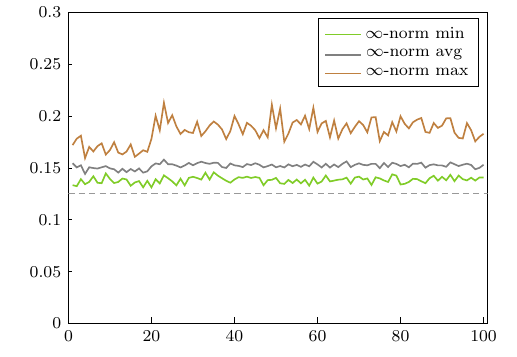}
&
\hspace{-15pt}\includegraphics[width=155pt]{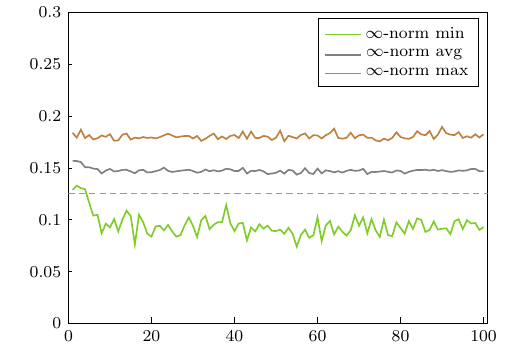}
&
\hspace{-15pt}\includegraphics[width=155pt]{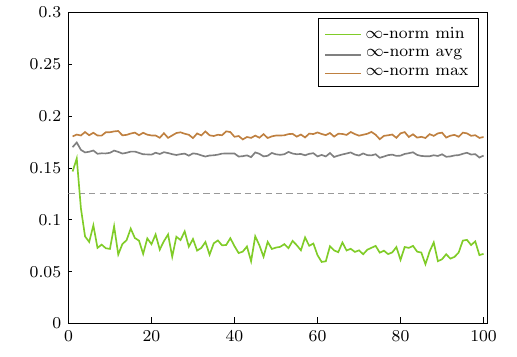}
\end{tabular}
\begin{center}\vspace{-5pt}
\footnotesize

\phantomsection{\bf Figure~10:}\label{FIG-10} Loss function for $n=5$, $w=6$ and error in supremum norm during 100 epochs of \\training; from left to right we see the results for $d=1,10,20$; the dashed line corresponds to $N_0$.

\normalsize

\end{center}

\vspace{3pt}

A comparison of Figure \hyperref[FIG-10]{10} with Figure \hyperref[FIG-6]{6} exhibits a similar improvement in terms of MSE loss. With respect to supremum norm, the best performing networks also here clearly beat $N_0$, while in average we do not achieve the latter. Moreover, while in Figure \hyperref[FIG-8]{8} the average error in supremum norm decays as $d$ grows, Figure \hyperref[FIG-10]{10} shows that the best average performance happens for low depth and that we have a significantly worse error for $d=20$.

\smallskip

Let us again look into some networks. Figure  \hyperref[FIG-1]{11} shows the dying neuron effect for networks $N_9$ and $N_{10}$ which correspond to the green resp.~brown lines in the plot at the right lower corner of Figure \hyperref[FIG-8]{8}. One can see immediately, that in $N_{10}$ the last layer has died completely, which means that $N_{10}$ represents a constant function. In $N_9$, on the other hand, we see that the first layer is completely alive and the second has only one dead neuron. Moreover, there is no layer that has died completely. This combinations seems to be the reason for the much better performance of $N_9$.

\vspace{10pt}

\begin{center}
\hspace{30pt}\includegraphics[width=380pt]{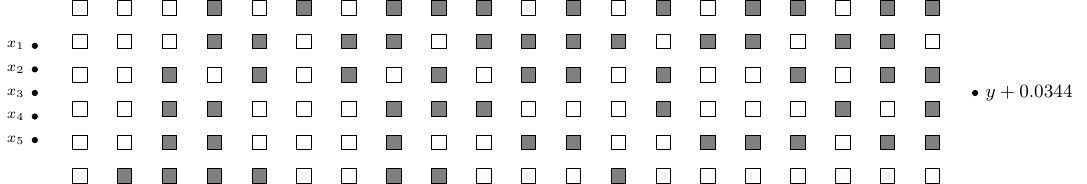}

\vspace{20pt}

\hspace{30pt}\includegraphics[width=380pt]{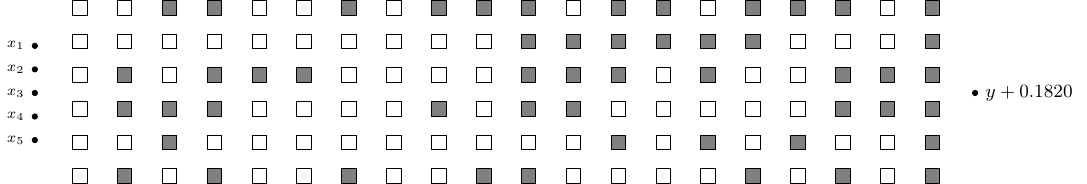}

\footnotesize

\vspace{5pt}

\phantomsection{\bf Figure~11:}\label{FIG-11} The trained networks $N_9$ and $N_{10}$ of width $w=6$\\and depth $d=20$ are our $\infty$-norm best resp.~worst approximation of $f$.

\normalsize

\end{center}

\medskip

As mentioned, the findings of \cite{LSSK20} predict that for $d\rightarrow\infty$ the network will die completely. Although we are still quite far away from this in Figure \hyperref[FIG-11]{11}, the averages in Figure \hyperref[FIG-10]{10} show that we end up in the same situation as discussed in Section \ref{SEC-Ex}, i.e., networks with constant value minimizing the loss function and thus performing worse than $N_0$ in supremum norm.

\section{Conclusion}

Let us summarize our findings and formulate answers to the questions Q1\hspace{0.5pt}--\hspace{1.5pt}Q3 in Section \ref{SEC-Intro}: 

\vspace{4pt}

\begin{compactitem}

\item[A1.] Our experiments suggest that $\min_{N\in\mathcal{D}^{\text{ReLU},n}(\mathbb{R}^n)}\|f-N\|_{K,\infty}=1/8$ holds with the function $N_0\equiv1/8$ being a minimizer for every $n\in\mathbb{N}$. The constant function $N_0$ can easily be expressed by any neural network independent of its depth or width. In our experiments it was attained only in some cases, see A2 below.

\vspace{3pt}

\item[A2.] In our low dimensional experiments, an improvement of approximation quality through an increase of depth was detectable and our models came, even in average, close to the constant function $N_0$ (compare with A1). For higher dimensions, and correspondingly increased depth, our models did not approach $N_0$ but headed for a different constant function. The reason for both is the effect of dying neurons. However, while in low dimensions the discrepancy between minimizing $\operatorname{L}(f,N)$ and minimizing $\|f-N\|_{K,\infty}$ is small, it seems to become larger as the dimension grows. Notice that computing $\operatorname{L}(f,N)$ depends on the choice of a sample, while $\|f-N\|_{K,\infty}$, at least for constant $N$, can be calculated analytically. Therefore the latter might also be related to the concentration of measure phenomenon.

\vspace{3pt}

\item[A3.] In the case $w=n+1$, i.e., the smallest width for which $\mathcal{D}^{\text{ReLU},w}(\mathbb{R}^n)\subseteq\mathcal{C}(\mathbb{R}^n)$ actually is dense, we also achieve noticeably better approximations of the function $f$ if we increase the depth. It seems however that, at least for higher dimensions, there is a threshold for the depth and after that the dying ReLU phenomenon will again lead to constant networks, which brings us back into the same situation as explained in A2.

\end{compactitem}

\medskip

\bibliographystyle{amsplain}
\bibliography{Ref-dl}

\providecommand{\bysame}{\leavevmode\hbox to3em{\hrulefill}\thinspace}
\providecommand{\MR}{\relax\ifhmode\unskip\space\fi MR }
% \MRhref is called by the amsart/book/proc definition of \MR.
\providecommand{\MRhref}[2]{%
  \href{http://www.ams.org/mathscinet-getitem?mr=#1}{#2}
}
\providecommand{\href}[2]{#2}
\begin{thebibliography}{10}

\bibitem{Beise21}
H.-P. Beise, S.~{Dias Da Cruz}, and U.~Schr\"oder, \emph{On decision regions of
  narrow deep neural networks}, Neural Networks \textbf{140} (2021), 121--129.

\bibitem{BHK}
A.~Blum, J.~Hopcroft, and R.~Kannan, \emph{Foundations of {D}ata {S}cience},
  Cambridge University Press, 2020.

\bibitem{BN20}
G.~Bresler and D.~Nagaraj, \emph{Sharp representation theorems for {ReLU}
  networks with precise dependence on depth}, 34th Conference on Neural
  Information Processing Systems (NeurIPS), Vancouver, Canada (2020), 187--202.

\bibitem{CMO23}
H.~Calvo-Pardo, T.~Mancini, and J.~Olmo, \emph{Optimal deep neural networks by
  maximization of the approximation power}, Comput. Oper. Res. \textbf{156}
  (2023), 106264.

\bibitem{Cyb89}
G.~Cybenko, \emph{Approximation by superpositions of a sigmoidal function},
  Math. Control Signals Systems \textbf{2} (1989), no.~4, 303--314.

\bibitem{dw25}
J.~Dommel and S.~A. Wegner, \emph{Source files for {S}ections 3 and 4},
  https://github.com/sawegner/dw25.

\bibitem{Goodfellow}
I.\ Goodfellow, Y.\ Bengio, and A.\ Courville, \emph{Deep {L}earning}, MIT
  Press, 2016.

\bibitem{Hanin}
B.~Hanin, \emph{Universal function approximation by deep neural nets with
  bounded width and {ReLU} activations}, Mathematics \textbf{7} (2019), no.~10.

\bibitem{HS17}
B.~Hanin and M.~Sellke, \emph{Approximating continuous functions by {ReLU} nets
  of minimal width}, Preprint, {arXiv}:1710.11278, 2017.

\bibitem{HBDS}
C.\ Hertrich, A.\ Basu, M.\ {Di Summa}, and M.\ Skutella, \emph{Towards lower
  bounds on the depth of {ReLU} neural networks}, 16th IEEE International
  Conference on Machine Learning and Applications, Cancun, Mexico (2017).

\bibitem{Hornik91}
K.\ Hornik, \emph{Approximation capabilities of multilayer feedforward
  networks}, Neural Networks \textbf{4} (1991), no.~2, 251--257.

\bibitem{HSW}
K.\ Hornik, M.\ Stinchcombe, and H.\ White, \emph{Multilayer feedforward
  networks are universal approximators}, Neural Networks \textbf{2} (1989),
  no.~5, 359--366.

\bibitem{KB17}
D.~P. Kingma and J.~Ba, \emph{Adam: {A} method for stochastic optimization},
  3rd International Conference on Learning Representations, San Diego, USA
  (2015).

\bibitem{LKYC24}
H.~Lee, Y.~Kim, S.~Y. Yang, and H.~Choi, \emph{Improved weight initialization
  for deep and narrow feedforward neural network}, Neural Networks \textbf{176}
  (2024), 106362.

\bibitem{LL21}
B.~Liu and Y.~Liang, \emph{Optimal function approximation with {ReLU} neural
  networks}, Neurocomputing \textbf{435} (2021), 216--227.

\bibitem{LSSK20}
L.~Lu, Y.~Shin, Y.~Su, and G.~E. Karniadakis, \emph{Dying {ReLU} and
  initialization: Theory and numerical examples}, Commun. Comput. Phys.
  \textbf{28} (2020), no.~5, 1671--1706.

\bibitem{NH}
V.\ Nair and G.~E. Hinton, \emph{Rectified linear units improve restricted
  {B}oltzmann machines}, 27th International Conference on Machine Learning,
  Haifa, Israel (2010), 807--815.

\bibitem{PGG20}
E.~Paluzo-Hidalgo, R.~Gonzalez-Diaz, and M.~A. Guti{\'e}rrez-Naranjo,
  \emph{Two-hidden-layer feed-forward networks are universal approximators: a
  constructive approach}, Neural Networks \textbf{131} (2020), 29--36.

\bibitem{UDL}
S.~J.~D. Prince, \emph{Understanding {D}eep {L}earning}, MIT Press, 2023.

\bibitem{Rosen}
F.~Rosenblatt, \emph{The design of an intelligent automaton}, Research Trends,
  Cornell Aeronautical Laboratory, inc. \textbf{VI} (1959), no.~2.

\bibitem{BZCOBU22}
A.~Teso-Fz-Beto{\~n}o, E.~Zulueta, M.~Cabezas-Olivenza, D.~Teso-Fz-Beto{\~n}o,
  and U.~Fernandez-Gamiz, \emph{A study of learning issues in feedforward
  neural networks}, Mathematics \textbf{10} (2022), no.~17, 3206.

\bibitem{Wegner24}
Sven~A. Wegner, \emph{Mathematical {I}ntroduction to {D}ata {S}cience},
  Springer, Berlin, Heidelberg, 2024.

\bibitem{Welper22}
G.~Welper, \emph{Universality of gradient descent neural network training},
  Neural Networks \textbf{150} (2022), 259--273.

\bibitem{ZCZG20}
D.~Zou, Y.~Cao, D.~Zhou, and Q.~Gu, \emph{Gradient descent optimizes
  over-parameterized deep {R}e{LU} networks}, Machine Learning \textbf{109}
  (2020), no.~3, 467--492.

\end{thebibliography}

\end{document}